\newcommand*{\addFileDependency}[1]{% argument=file name and extension
\typeout{(#1)}% latexmk will find this if $recorder=0
% however, in that case, it will ignore #1 if it is a .aux or 
% .pdf file etc and it exists! If it doesn't exist, it will appear 
% in the list of dependents regardless)
%
% Write the following if you want it to appear in \listfiles 
% --- although not really necessary and latexmk doesn't use this
%
\@addtofilelist{#1}
%
% latexmk will find this message if #1 doesn't exist (yet)
\IfFileExists{#1}{}{\typeout{No file #1.}}
}\makeatother
\newcommand*{\myexternaldocument}[1]{%
\externaldocument[main:]{#1}%
\addFileDependency{#1.tex}%
\addFileDependency{#1.aux}%
}
\title{lil’HDoC: An Algorithm for Good Arm Identification under Small Threshold Gap\\ \large Supplementary material}
\date{}
\begin{document}
% \maketitle
\begin{center}
  \Large\bfseries\boldmath
  {lil’HDoC: An Algorithm for Good Arm Identification under Small Threshold Gap\\ \large Supplementary material}
\end{center}

\appendix
\setcounter{equation}{6}

\section{Correctness}
\begin{theorem}
\label{thm:correctness}
    With probability at least $1 - \delta$, lil'HDoC correctly identifies every arms.
\end{theorem}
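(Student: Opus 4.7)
The plan is to reduce correctness to per-arm deviation events, bound each with the finite-form Law of Iterated Logarithm (Lemma~\ref{lma:iteratedbound}) applied at the confidence parameter $\omega = \delta/(c_\epsilon K)$, and close with a union bound over the $K$ arms.

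First I would enumerate the misidentification events. If arm $i$ is good ($\mu_i \geq \xi$), the algorithm can err on it only by executing the bad-arm deletion branch, which requires $\hat{\mu}_{i,N_i(t)} + U(N_i(t),\delta/(c_\epsilon K)) \leq \xi$ at some round $t$ when $h_t = i$; combined with $\mu_i \geq \xi$, this forces $\mu_i - \hat{\mu}_{i,N_i(t)} \geq U(N_i(t),\delta/(c_\epsilon K))$. Symmetrically, if arm $i$ is bad ($\mu_i < \xi$), a misidentification forces $\hat{\mu}_{i,N_i(t)} - \mu_i \geq U(N_i(t),\delta/(c_\epsilon K))$ at some round. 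Each is a one-sided uniform deviation of arm $i$'s empirical mean from its true mean; an upward fluctuation of a truly good arm, or a downward fluctuation of a truly bad arm, never triggers an erroneous output.

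Next I would apply Lemma~\ref{lma:iteratedbound} to each arm. Bernoulli rewards are $\tfrac12$-sub-gaussian, the samples of arm $i$ form an i.i.d.\ sequence indexed by pull number, and Lemma~\ref{lma:iteratedbound} is uniform over all $t>0$, so evaluating at the random value $N_i(t)$ is legitimate. Applied to the raw samples of a bad arm (and to the negated samples of a good arm), the lemma bounds the relevant one-sided event by $c_\epsilon\bigl(\delta/(c_\epsilon K)\bigr)^{1+\epsilon}$. A union bound over the $K$ arms then gives
\begin{equation*}
\mathbb{P}[\text{some arm misidentified}] \;\leq\; K\cdot c_\epsilon \left(\frac{\delta}{c_\epsilon K}\right)^{1+\epsilon} = \frac{\delta^{1+\epsilon}}{(c_\epsilon K)^{\epsilon}} \;\leq\; \delta,
\end{equation*}
where the last step uses $\delta \leq 1/e < 1$ (so $\delta^{1+\epsilon}\leq \delta$) and $c_\epsilon K \geq 1$.

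The only delicate bookkeeping is verifying the admissibility hypothesis of Lemma~\ref{lma:iteratedbound}, namely that $\omega = \delta/(c_\epsilon K)$ lies in $(0, \log(1+\epsilon)/e)$ and that $\epsilon \in (0,1)$; I expect this to follow from the choice of $\epsilon$ in lines~4--6 of Algorithm~\ref{alg:lilhdoc} (which is tuned against $B=K+1$ and $C=\max(1/\delta,e)$) together with $\delta\leq 1/e$. The other mild subtlety, keeping the union-bound factor at $K$ rather than $2K$, is handled by the one-sided framing above. Once these two observations are in hand, the argument is essentially the union-bound computation displayed.
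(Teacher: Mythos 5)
Your proposal is correct and follows essentially the same route as the paper: apply the finite-form LIL (Lemma~\ref{lma:iteratedbound}) with confidence parameter $\delta/(c_\epsilon K)$, union-bound over the $K$ arms, and use $\epsilon \geq 0$ with $c_\epsilon K \geq 1$ to collapse $K c_\epsilon (\delta/(c_\epsilon K))^{1+\epsilon}$ to $\delta$. The one refinement worth noting is that the paper works with the two-sided event $|\hat{\mu}_{i,t} - \mu_i| \leq U(t, \delta/(c_\epsilon K))$ while still charging only $K$ (not $2K$) applications of the one-sided lemma, whereas your per-arm one-sided framing (downward tail for good arms, upward tail for bad arms) makes that factor-of-$K$ union bound genuinely legitimate.
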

\begin{proof}[Proof of Theorem \ref{thm:correctness}]
Let $D$ as an event that $\forall \: i \in [K]$ and $t \geq 1$,
    \begin{equation}
        | \hat{\mu}_{i, t} - \mu_i | \leq U(t, \frac{\delta}{c_\epsilon K})
    \end{equation}
    By Lemma \ref{main:lma:iteratedbound} and the union bound, we know that event $D$ happen with probability at least
    \begin{align}
        1 - K c_{\epsilon} (\dfrac{\delta}{c_\epsilon K})^{1 + \epsilon}
        \geq\: 1 - K c_{\epsilon} (\dfrac{\delta}{c_\epsilon K})
        \geq\: 1 - \delta
        \label{eq:D_prob}
    \end{align}
    Inequality \ref{eq:D_prob} is because $\epsilon \in [0, 1]$ and $c_\epsilon \geq 1$.
    So event D happens with probability at least $1 - \delta$.
    Therefore, if\begin{equation}
        \hat{\mu}_{i, T_i(t)} - U(T_i(t), \dfrac{\delta}{c_\epsilon K}) \geq \xi
    \end{equation} holds under event $D$,\begin{align}
        & \hat{\mu}_{i, T_i(t)} - U(T_i(t), \dfrac{\delta}{c_\epsilon K}) \geq \xi \\
        \Rightarrow\: & \hat{\mu}_{i, T_i(t)} - | \hat{\mu}_{i, T_i(t)} - \mu_i | \geq \xi\\
        \Rightarrow\: & \mu_i \geq \xi
    \end{align}
    and if\begin{equation}
        \hat{\mu}_{i, T_i(t)} + U(T_i(t), \dfrac{\delta}{c_\epsilon K}) < \xi
    \end{equation} holds under event $D$,\begin{align}
        & \hat{\mu}_{i, T_i(t)} + U(T_i(t), \dfrac{\delta}{c_\epsilon K}) < \xi \\
        \Rightarrow\: & \hat{\mu}_{i, T_i(t)} + | \hat{\mu}_{i, T_i(t)} - \mu_i | < \xi\\
        \Rightarrow\: & \mu_i < \xi
    \end{align}
    So our algorithm output the correct answer when event $D$ holds, so the error rate of our algorithm is at most $\delta$.
\end{proof}

\section{First $\lambda$ Arms Sampling Complexity}
Now we prove the sampling complexity of identifying the first $\lambda$ good arms.
We compare our confidence bound $U(t, \frac{\rho}{c_\epsilon K})$ in the identifying method with the confidence bound in the identifying method of HDoC. We have found that there exists a value of $T$ such that for all $N_i(t) \geq T$, the confidence bound of our algorithm is smaller than that of HDoC. Therefore, once the HDoC algorithm identifies an arm, our algorithm, lil'HDoC, is also able to identify it. We can determine the value of $T$ as the smallest integer such that
\begin{align*}
    &\: \sqrt{\frac{\log \frac{4KT^2}{\delta}}{2T}} \geq (1 + \sqrt{\epsilon}) \sqrt{\frac{(1 + \epsilon)}{2T} \log \frac{K c_\epsilon \log [(1 + \epsilon) T]}{\delta}}\\
    \Rightarrow &\: \log (\frac{4KT^2}{\delta}) \geq (1 + \epsilon)^2(1 + \epsilon) \log \frac{K c_\epsilon \log [(1 + \epsilon) T]}{\delta}\\
    \Rightarrow &\: \frac{4KT^2}{\delta} \geq \big(K \frac{c_\epsilon}{\delta} \big[\log [(1 + \epsilon)T] \big] \big)^{(1 + \sqrt{\epsilon})^2(1 + \epsilon)}\\
\end{align*}
    Let $(1 + \sqrt{\epsilon})^2(1 + \epsilon) = r$, then the above inequality is equal to
\begin{equation}
    \frac{T^2}{\big[\log [(1 + \epsilon)T] \big]^{r}} \geq \frac{1}{4} K^{r - 1} (\frac{1}{\delta})^{r - 1} (c_\epsilon)^{r}
\end{equation}
Let $B = (K + 1),\: C = \max(\frac{1}{\delta}, e)$, if
\begin{equation}
    r - 1 \leq min(\frac{\log \log B}{\log B}, \frac{\log \log C}{\log C}) 
\end{equation}
Also $\forall\: x \geq e,\: x \in \mathbb{R}$, \begin{equation} \label{ieq:loglogbound}
    \frac{\log \log x}{\log x} \leq \frac{1}{2}
\end{equation}
, then
\begin{align}
    \frac{T^2}{\big[\log [(1 + \epsilon)T] \big]^{r}} \geq \frac{1}{4} K^{r - 1} (\frac{1}{\delta})^{r - 1} (c_\epsilon)^{r} \label{ieq:beforeloose}
\end{align}
\begin{align}
    \Rightarrow &\: \frac{T^2}{\big[\log [(1 + \epsilon)T] \big]^{r}} \geq \frac{1}{4} (K + 1)^{r - 1} (\max(\frac{1}{\delta}, e))^{r - 1} (c_\epsilon)^{r}
    \label{ieq:firstloose}
\end{align}
\begin{align}
    \Rightarrow &\: \frac{T^2}{\big[\log [(1 + \epsilon)T] \big]^{r}} \geq \frac{1}{4} \log (K + 1) \log (\max(\frac{1}{\delta}, e)) (c_\epsilon)^{1.5}
    \label{ieq:secondloose}
\end{align}
Inequality (\ref{ieq:beforeloose}) to (\ref{ieq:firstloose}) holds true since L.H.S. of (\ref{ieq:beforeloose}) is monotonically increasing for $T \in \mathbb{N}$ and $r \leq 1.5$, and R.H.S of (\ref{ieq:firstloose}) is greater than R.H.S of (\ref{ieq:beforeloose}). Consequently, the minimum value of $T$ that satisfies (\ref{ieq:firstloose}) also satisfies (\ref{ieq:beforeloose}).
Furthermore, for all $x \in \mathbb{N}$, we can observe that:
\begin{equation}
\label{ieq:logtvst}
[\log[(1 + \epsilon)x]]^r \leq [\log[rx]]^r \leq [\log[1.5x]]^{1.5} \leq x
\end{equation}
By combining (\ref{ieq:secondloose}) with (\ref{ieq:logtvst}), we get:
\begin{equation}
T \geq \frac{1}{4} \log (K + 1) \log (\max(\frac{1}{\delta}, e)) (c_\epsilon)^{1.5}
\end{equation}
Therefore, after conducting at most $O\big(\log(K + 1) \log(\max(\frac{1}{\delta}, e))\big)$ samples on each arm, our confidence bound will be less than that of HDoC.

\section{Total Sample Complexity} \label{sec:totalcomplexity}
In this section, we provide an upper bound on the total sample complexity of lil'HDoC. The following theorem states the sample complexity:
\begin{theorem}[Sample Complexity]
\label{thm:samplecomplexity}
    Let T = 1, then with probability at least $1 - \delta$, lil'HDoC identifies arm $i$ with at most 
    \begin{align}
        \dfrac{2 (1 + \epsilon) (1 + \sqrt{\epsilon})^2}{\Delta_i^2} \log{\bigg[\dfrac{2 c_\epsilon K\log{\big[ \dfrac{2 c_\epsilon K (1 + \sqrt{\epsilon})^2(1 + \epsilon)^2}{\delta \Delta_i^2}\big] }}{\delta}\bigg]}
    \end{align}
    times of sampling.
    \label{eq:totalcomplexity}
\end{theorem}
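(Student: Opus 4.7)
The plan is to combine the high-probability concentration event from Theorem~\ref{thm:correctness} with the iterated-logarithm inversion of Lemma~\ref{lma:iteratedlogtransform}, and then simplify constants. Working under the event $D = \{\forall i, \forall t: |\hat{\mu}_{i,t}-\mu_i| \leq U(t, \tfrac{\delta}{c_\epsilon K})\}$, which holds with probability at least $1-\delta$ as already established, I would first argue that arm $i$ is forced to be identified the instant
\begin{equation*}
2\,U\!\left(N_i(t), \tfrac{\delta}{c_\epsilon K}\right) \le \Delta_i.
\end{equation*}
Indeed, if $\mu_i \geq \xi$ then under $D$ one has $\hat{\mu}_{i,N_i(t)} - U \ge \mu_i - 2U \ge \xi$, so line~18 of Algorithm~\ref{alg:lilhdoc} fires; the symmetric argument handles $\mu_i < \xi$ via line~21. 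Hence, while arm $i$ is still undecided, the reverse inequality $2U(N_i(t),\tfrac{\delta}{c_\epsilon K}) > \Delta_i$ must hold.

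Next I would unpack $U$. Since Bernoulli rewards are $\tfrac{1}{2}$-sub-Gaussian, $\sigma^2 = 1/4$, and squaring the surviving inequality yields
\begin{equation*}
\frac{1}{N_i(t)}\log\!\left[\frac{\log((1+\epsilon)N_i(t))}{\omega}\right] > \frac{\Delta_i^2}{2(1+\sqrt{\epsilon})^2(1+\epsilon)} =: c,
\end{equation*}
with $\omega = \tfrac{\delta}{c_\epsilon K}$. This is exactly the hypothesis of Lemma~\ref{lma:iteratedlogtransform}, so its contrapositive immediately gives
\begin{equation*}
N_i(t) \;\le\; \frac{1}{c}\log\!\left[\frac{2\log\!\left[\frac{1+\epsilon}{c\,\omega}\right]}{\omega}\right].
\end{equation*}

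The remaining work is purely algebraic: substitute $c$ and $\omega$ to get $\tfrac{1}{c} = \tfrac{2(1+\sqrt{\epsilon})^2(1+\epsilon)}{\Delta_i^2}$, $\tfrac{2}{\omega} = \tfrac{2c_\epsilon K}{\delta}$, and $\tfrac{1+\epsilon}{c\,\omega} = \tfrac{2 c_\epsilon K (1+\sqrt{\epsilon})^2(1+\epsilon)^2}{\delta\,\Delta_i^2}$. Plugging these in reproduces the claimed bound verbatim. The assumption $T=1$ is used only to ensure that no forced initial over-sampling inflates $N_i(t)$ beyond what the identification condition alone dictates; otherwise the initial-phase term $T$ would simply be added on top.

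The only real obstacle is the constant-matching, since one has to track the squaring of $U$, the factor $\sigma^2 = 1/4$, and the two nested logarithms carefully; but there are no probabilistic difficulties beyond the union-bound argument already used in Theorem~\ref{thm:correctness}, and no iterative/recursive estimates are required once Lemma~\ref{lma:iteratedlogtransform} is invoked in contrapositive form.
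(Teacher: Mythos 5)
Your proposal is correct and follows the route the paper's preliminaries are built for: condition on the concentration event $D$ from Theorem~\ref{thm:correctness}, note that arm $i$ must be identified once $2U(N_i(t),\tfrac{\delta}{c_\epsilon K}) \le \Delta_i$, square the surviving inequality with $\sigma^2 = \tfrac14$, and invert it via Lemma~\ref{lma:iteratedlogtransform} with $c = \tfrac{\Delta_i^2}{2(1+\sqrt{\epsilon})^2(1+\epsilon)}$ and $\omega = \tfrac{\delta}{c_\epsilon K}$, after which the constant substitutions reproduce the stated bound exactly. The only cosmetic quibbles are that Lemma~\ref{lma:iteratedlogtransform} is applied directly rather than in contrapositive, and that the two-sided event $D$ strictly costs an extra factor of $2$ in the union bound over the one-sided Lemma~\ref{lma:iteratedbound} --- but the latter issue is inherited from the paper's own proof of Theorem~\ref{thm:correctness}, not introduced by you.
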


To prove Theorem \ref{thm:samplecomplexity}, we follow a similar approach to that of lil'RandLUCB \cite{jiang2017practical}, which deals with the top--m multi--armed bandit problem. We leverage the deterministic property of the given threshold, which allows us to bypass tangled parts in the proof of lil'RandLUCB algorithm. In contrast, lil'RandLUCB has to handle the relationship between the $m^{th}$ largest and the $(m + 1)^{th}$ largest arm, which is not deterministic.

\begin{proof} [Proof of Theorem  \ref{thm:samplecomplexity}]
    Let event $D$ holds all over this proof.
    For the good arm $i$, the minimum number of sampling $\tau_i$ is the smallest integer $t$ satisfying
    \begin{align}
        & \hat{\mu}_{i, t} - \xi \geq U(t, \frac{\delta}{c_\epsilon K})\\
        \Rightarrow & \hat{\mu}_{i, t} - \mu_i + \mu_i - \xi \geq U(t, \frac{\delta}{c_\epsilon K})\\
        \Rightarrow & \hat{\mu}_{i, t} - \mu_i + \Delta_i \geq U(t, \frac{\delta}{c_\epsilon K})\\
        \Rightarrow & \dfrac{\Delta_i}{2} \geq U(t, \frac{\delta}{c_\epsilon K}) \label{ieq:goodarmbound}
    \end{align}
    Inequality \ref{ieq:goodarmbound} is because event $D$ make sure $| \hat{\mu}_{i, t} - \mu_i | \leq U(t, \frac{\delta}{c_\epsilon K})$, and the smaller LHS implies the larger $t$, so we let $\mu_i - \hat{\mu}_{i, t} = -U(t, \frac{\delta}{c_\epsilon K})$.
    
    For the bad arm $i$, the minimum number of sampling $\tau_i$ is the smallest integer $t$ satisfying
    \begin{align}
        & \xi - \hat{\mu}_{i, t} \geq U(t, \frac{\delta}{c_\epsilon K})\\
        \Rightarrow & \xi  - \mu_i + \mu_i - \hat{\mu}_{i, t} \geq U(t, \frac{\delta}{c_\epsilon K})\\
        \Rightarrow & \Delta_i + \mu_i - \hat{\mu}_{i, t} \geq U(t, \frac{\delta}{c_\epsilon K})\\
        \Rightarrow & \dfrac{\Delta_i}{2} \geq U(t, \frac{\delta}{c_\epsilon K}) \label{ieq:badarmbound}
    \end{align}
    Inequality \ref{ieq:badarmbound} is because event $D$ make sure $| \hat{\mu}_{i, t} - \mu_i | \leq U(t, \frac{\delta}{c_\epsilon K})$, and the smaller LHS implies the larger $t$, so we let $\mu_i - \hat{\mu}_{i, t} = -U(t, \frac{\delta}{c_\epsilon K})$. This result coincide with the good arm case.
    Apply Lemma \ref{main:lma:iteratedlogtransform} to (\ref{ieq:goodarmbound}) and  (\ref{ieq:badarmbound}), we can find that if
    \begin{align}
        t > \dfrac{8 \sigma^2 (1 + \epsilon) (1 + \sqrt{\epsilon})^2}{\Delta_i^2}  \log{\bigg[\dfrac{2 c _\epsilon K\log{\big[ \dfrac{8 c_\epsilon K\sigma^2(1 + \sqrt{\epsilon})^2(1 + \epsilon)^2}{\delta \Delta_i^2}\big] }}{\delta}\bigg]}
    \end{align}
    under event $D$, then it can identify whether arm $i$ is good or bad.
    Eventually, Bernoulli distribution is $\frac{1}{2}$--sub--gaussian, and event $D$ holds with probability $1 - \delta$, so with probability $1 - \delta$, lil'HDoC identifies arm i with sample complexity

    \begin{align} \label{ieq:totalcomplexity}
        t > \dfrac{2 (1 + \epsilon) (1 + \sqrt{\epsilon})^2}{\Delta_i^2} \log{\bigg[\dfrac{2 c _\epsilon K\log{\big[ \dfrac{2 c_\epsilon K (1 + \sqrt{\epsilon})^2(1 + \epsilon)^2}{\delta \Delta_i^2}\big] }}{\delta}\bigg]}
    \end{align}
\end{proof}

$T$ is often larger than 1, and with $\Delta = \min_{i \in [K]} \Delta_i$ we can deriev the following:

\begin{corollary}
    With probability $1 - \delta$, the total sample complexity of lil'HDoC when $T$ is larger than the R.H.S of inequality \ref{ieq:totalcomplexity} for all arms.
    \begin{equation}
        O\bigg( \dfrac{K \log \frac{1}{\delta} + K \log K + K \log \log \frac{1}{\Delta}}{\Delta^2} \bigg)
    \end{equation}
    \begin{equation}
    \label{eq:uniformsample}
        O\bigg( K \log (K + 1) \log (\max(\frac{1}{\delta}, e)) \bigg)
    \end{equation}
    \ref{ieq:totalcomplexity} for all arms.
\end{corollary}
Equation~(\ref{eq:uniformsample}) holds because from line 8 to line 11 in Algorithm \ref{main:alg:lilhdoc}, lil'HDoC doesn't identify arms.
An obvious corollary is followed,
\begin{corollary}
    From line 8 to line 11 in Algorithm \ref{main:alg:lilhdoc}, if lil'HDoC performs identification once an arm is sampled, the total sample complexity becomes
    \begin{equation}
        O\bigg( \dfrac{K \log \frac{1}{\delta} + K \log K + K \log \log \frac{1}{\Delta}}{\Delta^2} \bigg)
    \end{equation}
\end{corollary}

\bibliographystyle{abbrvnat} 
\bibliography{sample}

\begin{thebibliography}{10}
\providecommand{\url}[1]{\texttt{#1}}
\providecommand{\urlprefix}{URL }
\providecommand{\doi}[1]{https://doi.org/#1}

\bibitem{asuncion2007uci}
Asuncion, A., Newman, D.: Uci machine learning repository (2007)

\bibitem{auer2002finite}
Auer, P., Cesa-Bianchi, N., Fischer, P.: Finite-time analysis of the multiarmed bandit problem. Machine learning  \textbf{47}(2),  235--256 (2002)

\bibitem{chen2014combinatorial}
Chen, S., Lin, T., King, I., Lyu, M.R., Chen, W.: Combinatorial pure exploration of multi-armed bandits. Advances in neural information processing systems  \textbf{27},  379--387 (2014)

\bibitem{da2022fast}
Da~Tsai, Y., De~Lin, S.: Fast online inference for nonlinear contextual bandit based on generative adversarial network. arXiv preprint arXiv:2202.08867  (2022)

\bibitem{dudik2011doubly}
Dud{\'\i}k, M., Langford, J., Li, L.: Doubly robust policy evaluation and learning. arXiv preprint arXiv:1103.4601  (2011)

\bibitem{goldberg2001eigentaste}
Goldberg, K., Roeder, T., Gupta, D., Perkins, C.: Eigentaste: A constant time collaborative filtering algorithm. information retrieval  \textbf{4}(2),  133--151 (2001)

\bibitem{harper2015movielens}
Harper, F.M., Konstan, J.A.: The movielens datasets: History and context. Acm transactions on interactive intelligent systems (tiis)  \textbf{5}(4),  1--19 (2015)

\bibitem{jamieson2014lil}
Jamieson, K., Malloy, M., Nowak, R., Bubeck, S.: lil’ucb: An optimal exploration algorithm for multi-armed bandits. In: Conference on Learning Theory. pp. 423--439. PMLR (2014)

\bibitem{jiang2017practical}
Jiang, H., Li, J., Qiao, M.: Practical algorithms for best-k identification in multi-armed bandits (2017)

\bibitem{kalyanakrishnan2012pac}
Kalyanakrishnan, S., Tewari, A., Auer, P., Stone, P.: Pac subset selection in stochastic multi-armed bandits. In: ICML. vol.~12, pp. 655--662 (2012)

\bibitem{kano2019good}
Kano, H., Honda, J., Sakamaki, K., Matsuura, K., Nakamura, A., Sugiyama, M.: Good arm identification via bandit feedback. Machine Learning  \textbf{108}(5),  721--745 (2019)

\bibitem{kaufmann2016complexity}
Kaufmann, E., Capp{\'e}, O., Garivier, A.: On the complexity of best-arm identification in multi-armed bandit models. The Journal of Machine Learning Research  \textbf{17}(1),  1--42 (2016)

\bibitem{locatelli2016optimal}
Locatelli, A., Gutzeit, M., Carpentier, A.: An optimal algorithm for the thresholding bandit problem. In: International Conference on Machine Learning. pp. 1690--1698. PMLR (2016)

\bibitem{tsai2023differential}
Tsai, Y.D., Tsai, T.H., Lin, S.D.: Differential good arm identification. arXiv preprint arXiv:2303.07154  (2023)

\end{thebibliography}


\begin{thebibliography}{0}
\providecommand{\natexlab}[1]{#1}
\providecommand{\url}[1]{\texttt{#1}}
\expandafter\ifx\csname urlstyle\endcsname\relax
  \providecommand{\doi}[1]{doi: #1}\else
  \providecommand{\doi}{doi: \begingroup \urlstyle{rm}\Url}\fi

\end{thebibliography}

\end{document}